\def\1{\bm{1}}
\DeclareMathAlphabet{\mathsfit}{\encodingdefault}{\sfdefault}{m}{sl}
\SetMathAlphabet{\mathsfit}{bold}{\encodingdefault}{\sfdefault}{bx}{n}
\newcommand{\X}{{\mathcal{X}}}
\newcommand{\Y}{{\mathcal{Y}}}
\newcommand{\dist}{d_{\mathcal{X}}}
\newtheorem{lemma}{Lemma}
\newtheorem{corollary}{Corollary}
\newtheorem{assumption}{Assumption}
\title{Fast Computation of Leave-One-Out Cross-Validation\\ for $k$-NN Regression}
\author{\name Motonobu Kanagawa \email motonobu.kanagawa@eurecom.fr\\
      \addr Data Science Department \\
      EURECOM
}
\begin{document}

\maketitle

\begin{abstract}
We describe a fast computation method for leave-one-out cross-validation (LOOCV) for $k$-nearest neighbours ($k$-NN) regression.
We show that, under a tie-breaking condition for nearest neighbours, the LOOCV estimate of the mean square error for $k$-NN regression is identical to the mean square error of $(k+1)$-NN regression evaluated on the training data, multiplied by the scaling factor $(k+1)^2/k^2$. 
Therefore, to compute the LOOCV score, one only needs to fit $(k+1)$-NN regression only once, and does not need to repeat training-validation of $k$-NN regression for the number of training data. 
Numerical experiments confirm the validity of the fast computation method.
\end{abstract}

\section{Introduction}

$k$-Nearest Neighbours ($k$-NN) regression \citep{stone1977consistent} is a classic nonparametric regression method that often performs surprisingly well in practice despite its simplicity \citep{chen2018explaining} and thus has been actively studied both theoretically and methdologically \citep[e.g,][]{gyorfi2002distribution,kpotufe2011k,jiang2019non,azadkia2019optimal,madrid2020adaptive,kpotufe2021marginal,lalande2023numerical,ignatiadis2023empirical,matabuena2024knn}.
It has a wide range of applications such as missing-value imputation \citep{troyanskaya2001missing}, conditional independence testing \citep{runge2018conditional}, outlier detection \citep{breunig2000lof}, approximate Bayesian computation \citep{biau2015new}, and function-valued regression \citep{lian2011convergence}, to just name a few.

For any test input, $k$-NN regression obtains its $k$-nearest training inputs and averages the corresponding $k$ training outputs to predict the test output. 
Thus, the number $k$ of nearest neighbours (and the distance function on the input space) is a key hyperparameter of $k$-NN regression and must be selected carefully.  
Indeed, theoretically, it is known that  $k$ should increase as the training data size $n$ increases for $k$-NN regression to converge to the true function \citep[e.g.,][Theorem 6.2]{gyorfi2002distribution}.
Hence, one should not use a prespecified value for $k$ (e.g., $k=5$) and must select $k$ depending on the training data.

A standard way for selecting the hyperparameters of a learning method is cross-validation \citep[e.g.,][Section 7.10]{hastie2009elements}.
However, cross-validation can be costly when the training data size is large. 
In particular, leave-one-out cross-validation (LOOCV) \citep{stone1974cross} can be computationally intensive since, if naively applied, it requires training the learning method on a training dataset of size $n-1$ and repeating it $n$ times. 
This issue also applies to the use of LOOCV for $k$-NN regression. 
On the other hand, theoretically, LOOCV is known to be better than cross-validation of a smaller number of split-folds as an estimator of the generalization error \citep[e.g.,][Section 6.2]{arlot2010survey}. 
\citet{azadkia2019optimal} theoretically analyses LOOCV for $k$-NN regression in selecting $k$ and discusses its optimality. 

This paper describes a fast method for computing LOOCV for $k$-NN regression. 
Specifically, we show that, under a tie-breaking assumption for nearest neighbours, the LOOCV estimate of the mean-squared error is identical to {\em the mean square error of $(k+1)$-NN regression evaluated on the training data, multiplied by the scaling factor $(k+1)^2 / k^2$} (Corollary~\ref{coro:LOOCV-formula} in Section~\ref{sec:method}).
Therefore, to perform LOOCV, one only needs to fit $(k+1)$-NN regression {\em only once} and does not need to repeat the $k$-NN fit $n$ times. 
To our knowledge, this method has not been reported in the literature. 

This paper is organised as follows. 
We describe $k$-NN regression and LOOCV in Section~\ref{sec:kNN-LOOCV-setting}.
We present the fast computation method in Section~\ref{sec:method}, empirically confirm its validity in Section~\ref{sec:experiment},
\textcolor{black}{discuss the tie-breaking condition in  Section~\ref{sec:discuss-tie-break}}, and conclude in Section~\ref{sec:conclusions}.

\section{$k$-NN Regression and LOOCV} 
\label{sec:kNN-LOOCV-setting}

Suppose we are given $n$ input-output pairs as training data:
\begin{equation} \label{eq:train-data-full}
    D_n := \{ (x_1, y_1), \dots, (x_n, y_n) \} \subset \X \times \Y,
\end{equation}
where $\X$ and $\Y$ are the input and output spaces. 
Specifically, $\X$ is a metric space with a distance metric $\dist: \X \times \X \mapsto [0, \infty)$ (e.g., $\X = \mathbb{R}^D$ is a $D$-dimensional space and $\dist(x,x') = \| x - x'\|$ is the Euclidean distance). 
The output space $\Y$ can be discrete (e.g., $\Y = \{0,1\}$ in the case of binary classification) or continuous (e.g., $\Y = \mathbb{R}$ for the case of real-valued regression, or $\Y = \mathbb{R}^M$ for vector-valued regression with $M$ outputs). 
Below, we use the following notation for the set of training inputs:
\begin{equation} \label{eq:imput-data-set}
X_n := \{x_1, \dots, x_n \}.    
\end{equation}

Assuming that there is an unknown function $f: \X \to \Y$ such that 
$$
y_i = f(x_i) + \varepsilon_i, \quad i = 1,\dots, n,
$$
where $\varepsilon_1, \dots, \varepsilon_n \in \Y$ are independent zero-mean noise random variables, the task is to estimate the function $f$ based on the training dataset $D_n$. 
$k$-NN regression is a simple, nonparametric method for this purpose, which often performs surprisingly well in practice and has solid theoretical foundations \citep[e.g,][]{gyorfi2002distribution,kpotufe2011k,chen2018explaining}. 

\subsection{$k$-NN Regression}
Let $k \in \mathbb{N}$ be fixed. 
For an arbitrary test input $x^* \in \X$, $k$-NN regression predicts its output by first searching for the $k$-nearest neighbours of $x^*$ from $X_n$ and then computing the average of the corresponding $k$ training outputs. 
To be more precise, define the set of indices for $k$ nearest neighbours as
\begin{align} 
& {\rm NN}(x^*,\ k,\ X_n) := \{ \ i_1,  \dots, i_k \in \{1, \dots, n \} \ \mid \nonumber \\
& \quad  \dist( x^*, x_{i_1}) \leq \cdots \leq \dist( x^*, x_{i_k}) \leq    \dist( x^*, x_j) \ \text{for all } j \in \{1, \dots, n\} \backslash \{i_1, \dots, i_k \} \ \}.    \label{eq:NN-def}   
\end{align}
Then, the $k$-NN prediction for $x^*$ is defined as
$$
\hat{f}_{k, D_n}(x^*) := \frac{1}{k} \sum_{i \in {\rm NN}(x^*,\ k,\ X_n)} y_i.
$$

%Note that, as indicated in the definition in \eqref{eq:NN-def} of using $<$ instead of $\leq$, we assume that there is no tie in the distances for simplicity. 
In the following, we make the following tie-breaking assumption, which makes ${\rm NN}(x_\ell,\ k,\ X_n)$ uniquely specified for $\ell = 1,\dots,n$  (i.e., the inequalities `$\leq$' in \eqref{eq:NN-def} with $x^* = x_\ell$ become `$<$').
\begin{assumption}[\textcolor{black}{Tie-breaking}] \label{as:train-input-points}
For all $i, j = 1, \dots, n$ with $i \not= j$, we have $x_i \not= x_j$ and $d_\X(x_\ell, x_i) \not= d_\X(x_\ell, x_j)$ for all $\ell = 1,\dots, n$.
\end{assumption}

\textcolor{black}{
This tie-breaking condition is usually satisfied if $x_1, \dots, x_n$ are multivariate and sampled from a continuous distribution. Section~\ref{sec:discuss-tie-break} provides a further discussion.
}

\subsection{Leave-One-Out Cross-Validation (LOOCV)}

LOOCV for $k$-NN regression is defined as follows. 
Consider the case $\mathcal{Y} = \mathbb{R}^M$ with $M \in \mathbb{N}$ ($M=1$ is the case of standard real-valued regression).
For each $\ell = 1, \dots, n$, consider the training dataset~\eqref{eq:train-data-full} with the $\ell$-th pair $(x_\ell, y_\ell)$ removed: 
\begin{align*}
D_n \backslash \{ (x_\ell, y_\ell) \}  = \left\{ (x_1,y_1), \dots, (x_{\ell - 1}, y_{\ell - 1}), (x_{\ell + 1}, y_{\ell + 1}), \dots, (x_n, y_n)  \right\}.    
\end{align*}
Then, the $k$-NN prediction for any $x^*$ based on $D_n \backslash \{ (x_\ell, y_\ell) \}$ is 
$$
\hat{f}_{k, D_n \backslash \{ (x_\ell, y_\ell) \} } (x^*) =   \frac{1}{k} \sum_{i \in {\rm NN}(x^*,\ k,\ X_n \backslash \{ x_\ell \})} y_i.
$$
Then, the LOOCV score for $k$-NN regression can be defined as 
\begin{equation} \label{eq:LOOCV-naive}
    {\rm LOOCV}(k, D_n) := \frac{1}{n} \sum_{\ell = 1}^n \left\| \hat{f}_{k, D_n \backslash \{ (x_\ell, y_\ell) \} } (x_\ell) - y_\ell \right\|^2.
\end{equation}
That is, for each $\ell = 1,\dots, n$, the held-out pair $(x_\ell, y_\ell)$ is used as validation data for the $k$-NN regression fitted on the training data $D_n \backslash \{ (x_\ell, y_\ell) \}$ of size $n-1$. 
Calculating the LOOCV score for a large $n$ is computationally expensive since one needs to fit $k$-NN regression $n$ times if naively implemented. 
Next, we will show how this computation can be done efficiently by fitting $(k+1)$-NN regression only once.

\section{Fast Computation of LOOCV for $k$-NN Regression}
\label{sec:method}

To compute the LOOCV score~\eqref{eq:LOOCV-naive}, we need to compute, for each $\ell = 1,\dots, n$,
\begin{equation} \label{eq:kNN-LOOCV-on-xell}
    \hat{f}_{k, D_n \backslash \{ (x_\ell, y_\ell) \} } (x_\ell) =   \frac{1}{k} \sum_{i \in {\rm NN}(x_\ell,\ k,\ X_n \backslash \{ x_\ell \})} y_i.
\end{equation}
To this end, we need to obtain ${\rm NN}(x_\ell,\ k,\ X_n \backslash \{ x_\ell \})$, the indices for the $k$ nearest neighbours of $x_\ell$ in $X_n \backslash \{x_\ell\}$.

The key insight is the following simple fact: {\em The union of $\{x_\ell\}$ and the $k$ nearest neighbours of $x_\ell$ in $X_n \backslash \{x_\ell \}$ is identical to the $k+1$ nearest neighbours of $x_\ell$ in $X_n$.}  
That is, under Assumption~\ref{as:train-input-points}, we have
\begin{equation} \label{eq:identities-NN}
    {\rm NN}(x_\ell,\ k,\ X_n \backslash \{ x_\ell \}) \cup \{ x_\ell \} = {\rm NN}(x_\ell,\ k+1,\ X_n  ).
\end{equation}
See Figure~\ref{fig:illustration-knn} for an illustration of the case $k = 3$.
The reasoning is as follows.
The first nearest neighbour of $x_\ell$ in $X_n$ is, of course, $x_\ell \in X_n$. 
The second nearest neighbour of $x_\ell$ in $X_n$ is the first nearest neighbour of $x_\ell$ in $X_n \backslash \{ x_\ell \}$. 
Generally, for $1 \leq m \leq k$, the $(m+1)$-th nearest neighbour of $x_\ell$ in $X_{\ell}$ is the $m$-th nearest neighbour of $x_\ell$ in $X_n \backslash \{x_\ell\}$.
Therefore, the identity~\eqref{eq:identities-NN} holds.

\begin{figure}[t]
    \centering
    \includegraphics[width=0.4\linewidth]{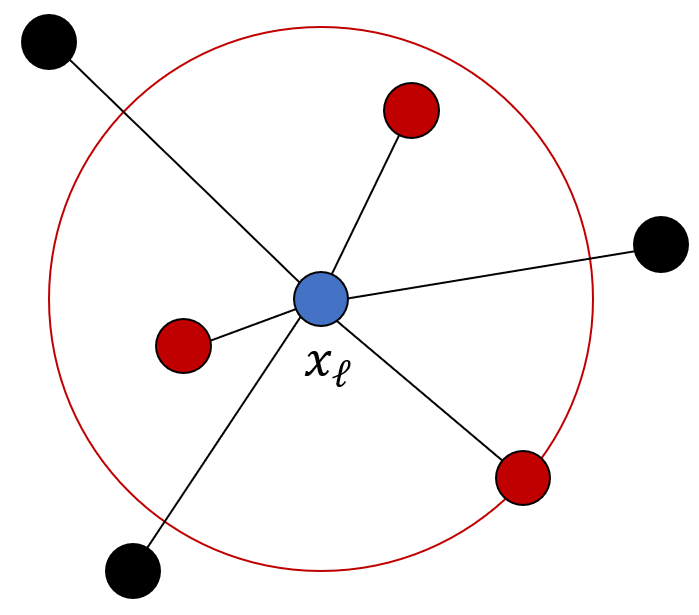}
    \caption{Illustration of $k=3$ nearest neighbours of $x_\ell$. The blue point represents $x_\ell$, the three red points are the $k=3$ nearest neighbours of $x_\ell$ in $X_n \backslash \{ x_\ell \}$, the black points are other points in $X_n \backslash \{ x_\ell \}$, and the red circle is the sphere of radius equal to the distance between $x_\ell$ and its third nearest neighbour in $X_n \backslash \{ x_\ell \}$ (the red point on the circle).}
    \label{fig:illustration-knn}
\end{figure}

Using~\eqref{eq:identities-NN}, we can rewrite \eqref{eq:kNN-LOOCV-on-xell} in terms of $(k+1)$-NN regression, as summarized as follows. 
\begin{lemma} \label{lemma:kNN-held-out-pred}
Under Assumption~\ref{as:train-input-points}, we have, for all $k \in \mathbb{N}$ and $\ell = 1, \dots, n$,  
$$
    \hat{f}_{k, D_n \backslash \{ (x_\ell, y_\ell) \} } (x_\ell) 
    = \frac{k+1}{k}  \hat{f}_{k+1, D_n} (x_\ell)  - \frac{1}{k} y_\ell
$$
\end{lemma}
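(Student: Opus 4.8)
The plan is to take the identity~\eqref{eq:identities-NN} as the engine of the whole argument and reduce the lemma to a short algebraic rearrangement. Since that identity has already been established under Assumption~\ref{as:train-input-points}, the only real work is to translate it into a relation between the two predictions and then solve for the held-out one.

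First I would write out both predictions through their defining averages:
\[
\hat{f}_{k+1, D_n}(x_\ell) = \frac{1}{k+1} \sum_{i \in {\rm NN}(x_\ell,\ k+1,\ X_n)} y_i,
\qquad
\hat{f}_{k, D_n \backslash \{(x_\ell,y_\ell)\}}(x_\ell) = \frac{1}{k} \sum_{i \in {\rm NN}(x_\ell,\ k,\ X_n \backslash \{x_\ell\})} y_i.
\]
Read at the level of index sets, \eqref{eq:identities-NN} says that ${\rm NN}(x_\ell,\ k+1,\ X_n)$ consists of exactly the indices in ${\rm NN}(x_\ell,\ k,\ X_n \backslash \{x_\ell\})$ together with the index $\ell$ itself.

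The one point I would check carefully is that this union is disjoint, i.e.\ that $\ell$ is not already among the $k$ neighbours indexing ${\rm NN}(x_\ell,\ k,\ X_n \backslash \{x_\ell\})$. This is immediate, since those indices range over $X_n \backslash \{x_\ell\}$, which excludes $x_\ell$; Assumption~\ref{as:train-input-points} is precisely what makes each ${\rm NN}$ set uniquely determined, so there is no ambiguity in the splitting. Granting disjointness, I would split the $(k+1)$-term sum as
\[
\sum_{i \in {\rm NN}(x_\ell,\ k+1,\ X_n)} y_i = y_\ell + \sum_{i \in {\rm NN}(x_\ell,\ k,\ X_n \backslash \{x_\ell\})} y_i = y_\ell + k\, \hat{f}_{k, D_n \backslash \{(x_\ell,y_\ell)\}}(x_\ell).
\]

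Finally I would substitute this into the definition of $\hat{f}_{k+1, D_n}(x_\ell)$, obtaining $(k+1)\,\hat{f}_{k+1,D_n}(x_\ell) = y_\ell + k\, \hat{f}_{k, D_n \backslash \{(x_\ell,y_\ell)\}}(x_\ell)$, and then solve for the held-out prediction to recover the claimed formula. I expect no genuine obstacle beyond the bookkeeping of distinguishing points from their indices: all the content is carried by~\eqref{eq:identities-NN}, and what remains is a one-line rearrangement.
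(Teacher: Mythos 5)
Your proposal is correct and follows essentially the same route as the paper: both rest entirely on the identity~\eqref{eq:identities-NN}, split the $(k+1)$-neighbour sum into $y_\ell$ plus the $k$-neighbour sum over $X_n \backslash \{x_\ell\}$, and finish with the same one-line rearrangement (the paper solves directly for the held-out prediction rather than first isolating $(k+1)\hat{f}_{k+1,D_n}(x_\ell)$, a purely cosmetic difference). Your explicit check that $\ell \notin {\rm NN}(x_\ell, k, X_n\backslash\{x_\ell\})$ is a nice touch the paper leaves implicit.
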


\begin{proof}
Using~\eqref{eq:identities-NN}, we have 
\begin{align*}
\hat{f}_{k, D_n \backslash \{ (x_\ell, y_\ell) \} } (x_\ell) 
& =  \frac{1}{k} \sum_{i \in {\rm NN}(x_\ell,\ k,\ X_n \backslash \{ x_\ell \})} y_i   
=   \frac{1}{k} \left( \sum_{i \in {\rm NN}(x_\ell,\ k+1,\ X_n )} y_i - y_\ell \right) \\
& =  \frac{k+1}{k} \frac{1}{k+1}   \sum_{i \in {\rm NN}(x_\ell,\ k+1,\ X_n )} y_i - \frac{1}{k} y_\ell \\
& = \frac{k+1}{k}  \hat{f}_{k+1, D_n} (x_\ell)  - \frac{1}{k} y_\ell.  
\end{align*}
  
\end{proof}

%, which is fitted on the leave-one-out dataset $D_n \backslash \{ (x_\ell, y_\ell) \}$ and evaluated on the held-out input $x_\ell$, 

Lemma~\ref{lemma:kNN-held-out-pred} shows that the leave-one-out $k$-NN prediction $\hat{f}_{k, D_n \backslash \{ (x_\ell, y_\ell) \} } (x_\ell)$ on the held-out input $x_\ell$ can be written as $\frac{k+1}{k}  \hat{f}_{k+1, D_n} (x_\ell)  - \frac{1}{k} y_\ell$, which does {\em not} involve the hold-out operation of removing $(x_\ell, y_\ell)$ from $D_n$; it just requires fitting the $(k+1)$-NN regression $\hat{f}_{k+1, D_n}$ on $D_n$, evaluate it on $x_\ell$, scale it by $(k+1)/k$ and subtract it by $y_\ell / k$. 

Thus, to compute the LOOCV score~\eqref{eq:LOOCV-naive}, one needs to fit the $(k+1)$-NN regression {\em only once} on the dataset $D_n$. 
The resulting computationally efficient formula for the LOOCV score is given below.

\begin{corollary} \label{coro:LOOCV-formula}
Under Assumption~\ref{as:train-input-points}, for the LOOCV score in \eqref{eq:LOOCV-naive}, we have
\begin{equation} \label{eq:LOOCV-fast-formula}
    {\rm LOOCV}(k, D_n) = \left(\frac{k+1}{k} \right)^2  \frac{1}{n}  \sum_{\ell = 1}^n \left\|    \hat{f}_{k+1, D_n} (x_\ell) - y_\ell \right\|^2.
\end{equation}
\end{corollary}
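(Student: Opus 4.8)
The plan is to obtain \eqref{eq:LOOCV-fast-formula} by substituting the identity from Lemma~\ref{lemma:kNN-held-out-pred} directly into the definition of the LOOCV score in \eqref{eq:LOOCV-naive} and simplifying the residual inside each norm. All the substantive work — the nearest-neighbour set identity \eqref{eq:identities-NN} and its algebraic consequence — has already been carried out, so this corollary is essentially a one-line rearrangement.

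First I would fix $\ell \in \{1,\dots,n\}$ and examine the residual $\hat{f}_{k, D_n \backslash \{ (x_\ell, y_\ell) \} } (x_\ell) - y_\ell$ appearing in \eqref{eq:LOOCV-naive}. Replacing the held-out prediction by its expression from Lemma~\ref{lemma:kNN-held-out-pred} turns this residual into $\frac{k+1}{k} \hat{f}_{k+1, D_n}(x_\ell) - \frac{1}{k} y_\ell - y_\ell$. The key step is then to combine the two target terms using $-\frac{1}{k} y_\ell - y_\ell = -\frac{k+1}{k} y_\ell$, which exposes the common scalar $\frac{k+1}{k}$ so that the whole residual factors as $\frac{k+1}{k}\bigl( \hat{f}_{k+1, D_n}(x_\ell) - y_\ell \bigr)$.

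Next I would apply the absolute homogeneity of the norm on $\Y = \mathbb{R}^M$ to pull the scalar out, so that $\bigl\| \hat{f}_{k, D_n \backslash \{ (x_\ell, y_\ell) \} } (x_\ell) - y_\ell \bigr\|^2 = \bigl(\frac{k+1}{k}\bigr)^2 \bigl\| \hat{f}_{k+1, D_n}(x_\ell) - y_\ell \bigr\|^2$. Since the factor $\bigl(\frac{k+1}{k}\bigr)^2$ is independent of $\ell$, it can be taken outside the average $\frac{1}{n}\sum_{\ell=1}^n$, and this yields exactly \eqref{eq:LOOCV-fast-formula}.

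The only point requiring a little care is the combination of the two $y_\ell$ terms: one must keep the $-y_\ell$ from the definition of the LOOCV residual and the $-\frac{1}{k} y_\ell$ arising from the lemma together, so that the single scalar $\frac{k+1}{k}$ factors out of \emph{both} the prediction and the target. There is no genuine obstacle here, as Assumption~\ref{as:train-input-points} is already invoked through Lemma~\ref{lemma:kNN-held-out-pred}, and no further hypotheses are needed for the corollary itself.
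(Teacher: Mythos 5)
Your proposal is correct and follows exactly the same route as the paper's proof: substitute Lemma~\ref{lemma:kNN-held-out-pred} into \eqref{eq:LOOCV-naive}, combine $-\frac{1}{k}y_\ell - y_\ell = -\frac{k+1}{k}y_\ell$, and pull the scalar $\left(\frac{k+1}{k}\right)^2$ out of the squared norm and the sum. No differences worth noting.
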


\begin{proof}
Using Lemma~\ref{lemma:kNN-held-out-pred}, we have
\begin{align*}
& {\rm LOOCV}(k, D_n) 
 = \frac{1}{n} \sum_{\ell = 1}^n \left\| \hat{f}_{k, D_n \backslash \{ (x_\ell, y_\ell) \} } (x_\ell) - y_\ell \right\|^2  \\
& = \frac{1}{n} \sum_{\ell = 1}^n \left\|  \frac{k+1}{k}  \hat{f}_{k+1, D_n} (x_\ell)  - \frac{1}{k} y_\ell  - y_\ell \right\|^2 
 = \left(\frac{k+1}{k} \right)^2  \frac{1}{n} \sum_{\ell = 1}^n \left\|    \hat{f}_{k+1, D_n} (x_\ell) - y_\ell \right\|^2. 
\end{align*} 

\end{proof}

The expression~\ref{eq:LOOCV-fast-formula} shows that the LOOCV score~\eqref{eq:LOOCV-naive} for $k$-NN regression is simply the {\em residual sum of squares} (or mean-square error) of the $(k+1)$-NN regression fitted and evaluated on $D_n$, multiplied by the scaling factor $\left(\frac{k+1}{k} \right)^2$.
This scaling factor becomes large when $k$ is small, penalising overfitting and preventing too small $k$ from being chosen by LOOCV.

\section{Experiments} \label{sec:experiment}

We empirically check the validity of the formula~\eqref{eq:LOOCV-fast-formula} for efficient LOOCV computation.\footnote{The code for reproducing the experiments is available on {\tt https://github.com/motonobuk/LOOCV-kNN}}
%\footnote{The code for reproducing the experiments is available in the supplementary material.}
We consider a real-valued regression problem where $\X = \mathbb{R}^d$ and $\Y = \mathbb{R}$, \textcolor{black}{using two real datasets from {\tt scikit-learn}: ``Diabetes''   and ``Wine''.} 
We standardized each input feature to have mean zero and unit variance. 

We compare two approaches: one is the brute-force computation of the LOOCV score~\eqref{eq:LOOCV-naive} (``LOOCV-Brute''), and the other is the efficient computation based on the derived formula~\eqref{eq:LOOCV-fast-formula} (``LOOCV-Efficient''). 
We use the implementation of {\tt scikit-learn}\footnote{{\tt https://scikit-learn.org/stable/modules/generated/sklearn.neighbours.KneighboursRegressor.html}} with {\tt `\verb|kd_tree|'} for computing nearest neighbours.

\textcolor{black}{
 Figure~\ref{fig:experiment-results} (Diabetes) and Figure~\ref{fig:experiment-results-wine} (Wine) show the respective results.}
On the left of each figure, we show the LOOCV scores computed by the two methods for different values of $k$, the number of nearest neighbours. 
They exactly coincide; we also have checked this numerically. 
This result verifies the correctness of the formula~\eqref{eq:LOOCV-fast-formula}.  
On the right, we show the computation times\footnote{CPU: 1.1 GHz Quad-Core Intel Core i5.  Memory: 8 GB 3733 MHz LPDDR4X.}  required for either method for different training data sizes $n$ for fixed $k=5$.
While LOOCV-Brute's computation time increases linearly with the sample size $n$, LOOCV-Efficient's computation time remains almost unchanged and is negligible compared with LOOCV-Brute's computation time. 
This shows the effectiveness of the formula~\eqref{eq:LOOCV-fast-formula} for the fast computation of the LOOCV score.

\begin{figure}[t]
    \centering
    \includegraphics[width=0.45\linewidth]{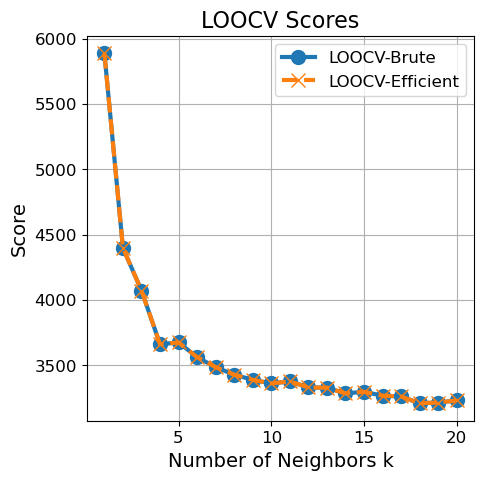} \quad 
    \includegraphics[width=0.45\linewidth]{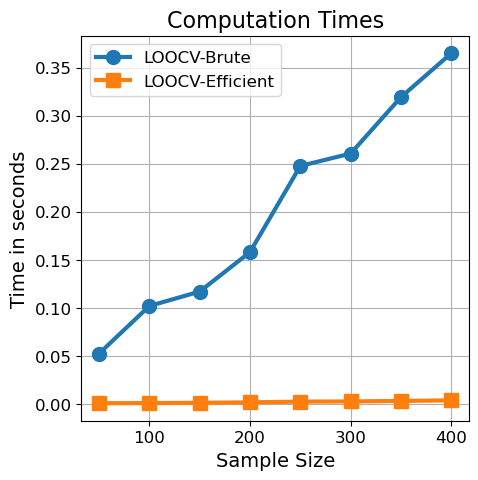}
    \caption{Experimental results on the Diabetes dataset. The left figure shows the LOOCV scores~\eqref{eq:LOOCV-naive} computed in the brute-force manner (``LOOCV-Brute'') and by using the derived formula~\eqref{eq:LOOCV-fast-formula} for different values of $k$. The right figure shows the computation times of either approach for different data sizes $n$ for fixed $k=5$.} 
    \label{fig:experiment-results}
\end{figure}

\begin{figure}[t]
    \centering
    \includegraphics[width=0.45\linewidth]{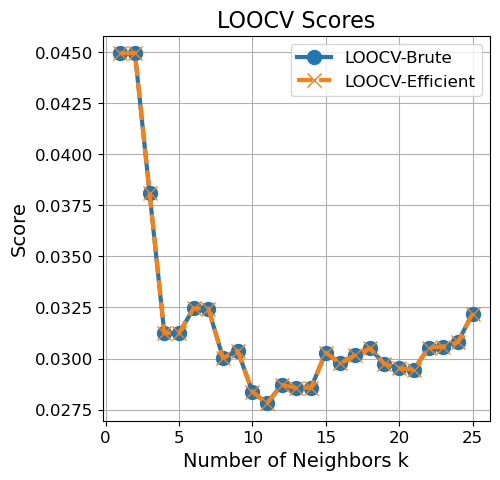} \quad
    \includegraphics[width=0.45\linewidth]{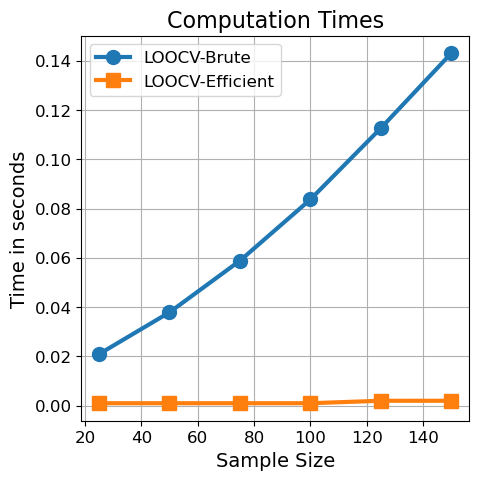}
    \caption{
    \textcolor{black}{
    Experimental results on the Wine dataset. The left figure shows the LOOCV scores~\eqref{eq:LOOCV-naive} computed in the brute-force manner (``LOOCV-Brute'') and by using the derived formula~\eqref{eq:LOOCV-fast-formula} for different values of $k$. The right figure shows the computation times of either approach for different data sizes $n$ for fixed $k=5$.}} 
    \label{fig:experiment-results-wine}
\end{figure}

\section{Discussion on the Tie-breaking Condition} 
\label{sec:discuss-tie-break}

\textcolor{black}{
Lastly, as a caveat, we consider the case where the tie-break condition in Assumption~\ref{as:train-input-points} is not satisfied.
We use the same Diabetes and Wine datasets as in Section~\ref{sec:experiment}, but only employ one input feature in each dataset: ``BMI'' for the Diabetes and ``malic-acid'' for the Wine.
Since each feature has many duplicates, there are many $i \not= j$ with $x_i = x_j$, thus violating the tie-breaking condition.  
Figure~\ref{fig:experiment-results-duplicate} shows the results corresponding to the left figures of Figures~\ref{fig:experiment-results} and \ref{fig:experiment-results-wine}.
LOOCV-Efficient is no longer exact and overestimates the LOOCV-Brute when $k$ is small.}

This phenomenon can be understood by considering the case where $k = 1$. 
Suppose there is $\ell \in \{ 1, \dots, n \}$ such that $x_\ell = x_i = x_j$ for some $i \not= j \not= \ell$.
In this case, the first nearest neighbour of $x_\ell$ in $X_n = \{x_1, \dots, x_n\}$ is $x_\ell$, $x_i$ and $x_j$, since $d_\X(x_\ell, x_\ell) = d_\X(x_\ell, x_i) = d_\X(x_\ell, x_j) = 0$. 
Therefore, depending on the tie-breaking rule of the nearest neighbour search algorithm, $x_i$ and $x_j$ may be selected as the first $k+1 = 2$ nearest neighbours of $x_\ell$ in $X_n$; in this case the identity in \eqref{eq:identities-NN} does not hold.

\textcolor{black}{
However, as shown in Figure~\ref{fig:experiment-results-duplicate}, LOOCV-Efficient overestimates LOOCV-Brute only for {\em small} $k$, where the LOOCV scores are {\em large}. 
Therefore, this issue would not severely affect the {\em best} $k$ that minimizes the LOOCV score.
Indeed, for the Diabetes data, the best $k$ selected by LOOCV-Efficient is $k = 17$ and identical to LOOCV-Brute.
For the Wine data, the best $k$ of LOOCV-Efficient is $k = 17$, while that of LOOCV-Brute is $k = 21$, which are similar and would not significantly change the prediction performance. 
}

The above example suggests that Assumption~\ref{as:train-input-points} is essential for the formula~\eqref{eq:LOOCV-fast-formula} to be exact. 
In practice, one could check whether Assumption~\ref{as:train-input-points} is satisfied by, e.g., checking whether there are no duplicates in the inputs $x_1, \dots, x_n$ beforehand; if they exist, one could resolve the duplicates by, e.g., taking the average of the outputs of duplicated samples.
However, as demonstrated in the results in Figures~\ref{fig:experiment-results} and \ref{fig:experiment-results-wine} for the full Diabetes dataset, Assumption~\ref{as:train-input-points} would be satisfied for many practical situations where the input features are multivariate and continuous.

\begin{figure}[t]
    \centering
    \includegraphics[width=0.45\linewidth]{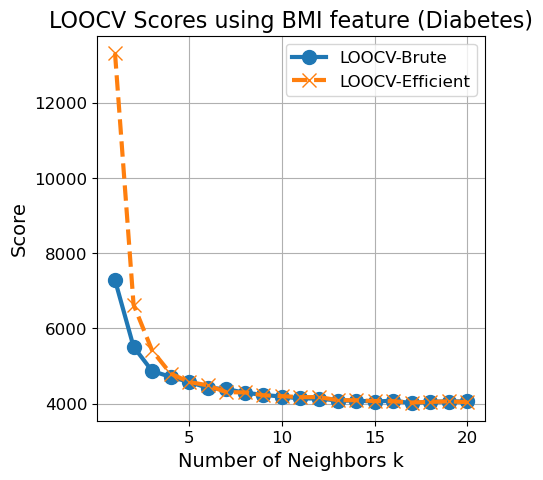}~~~~
    \includegraphics[width=0.45\linewidth]{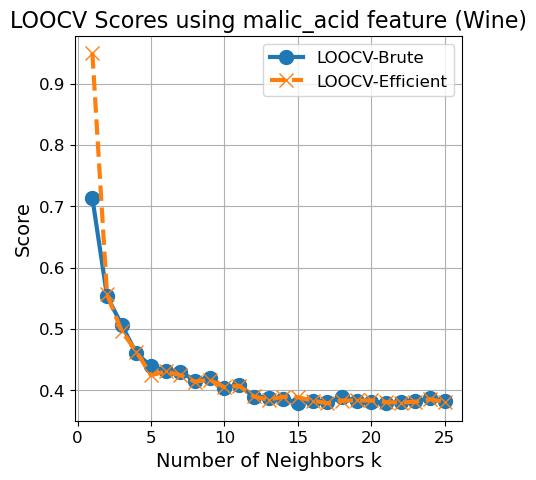}
    \caption{
    \textcolor{black}{
    LOOCV scores for the Diabetes dataset (left) and the Wine dataset (right), each of which only uses {\em one input feature}. 
    The used input feature has many duplicates and thus does not satisfy the tie-breaking condition in Assumption~\ref{as:train-input-points}.
    {\bf Left:} The best $k$ with the lowest LOOCV score is 17 for both LOOCV-Brute and LOOCV-Efficient.
    {\bf Right:} The best $k$ with the lowest LOOCV score is 21 for LOOCV-Brute and 17 for LOOCV-Efficient.}
    }
    \label{fig:experiment-results-duplicate}
\end{figure}

\section{Conclusions} \label{sec:conclusions}
We showed that LOOCV for $k$-NN can be computed quickly by fitting $(k+1)$-NN regression only once, evaluating the mean-square error on the training data and multiplying it by the scaling factor $(k+1)^2 / k^2$. 
By applying this technique, many applications of $k$-NN regression can be accelerated. 
It also opens up the possibility of using LOOCV for optimising not only $k$ but also the distance function $d_\X$; this would be one important future direction.

\subsection*{Acknowledgements} 
The author thanks Takafumi Kajihara for a discussion, as well as the Action Editor and the anonymous reviewers for their time and comments.

\bibliographystyle{apalike}
\bibliography{bibfile}

\begin{thebibliography}{}

\bibitem[Arlot and Celisse, 2010]{arlot2010survey}
Arlot, S. and Celisse, A. (2010).
\newblock A survey of cross-validation procedures for model selection.
\newblock {\em Statistics Surveys}, 4:40--79.

\bibitem[Azadkia, 2019]{azadkia2019optimal}
Azadkia, M. (2019).
\newblock Optimal choice of $ k $ for $ k $-nearest neighbor regression.
\newblock {\em arXiv preprint arXiv:1909.05495}.

\bibitem[Biau et~al., 2015]{biau2015new}
Biau, G., C{\'e}rou, F., and Guyader, A. (2015).
\newblock New insights into approximate {Bayesian} computation.
\newblock In {\em Annales de l'IHP Probabilit{\'e}s et Statistiques}, volume~51, pages 376--403.

\bibitem[Breunig et~al., 2000]{breunig2000lof}
Breunig, M.~M., Kriegel, H.-P., Ng, R.~T., and Sander, J. (2000).
\newblock {LOF}: identifying density-based local outliers.
\newblock In {\em Proceedings of the 2000 ACM SIGMOD International Conference on Management of Data}, pages 93--104.

\bibitem[Chen et~al., 2018]{chen2018explaining}
Chen, G.~H., Shah, D., et~al. (2018).
\newblock Explaining the success of nearest neighbor methods in prediction.
\newblock {\em Foundations and Trends{\textregistered} in Machine Learning}, 10(5-6):337--588.

\bibitem[Gy{\"o}rfi et~al., 2002]{gyorfi2002distribution}
Gy{\"o}rfi, L., Kohler, M., Kryzak, A., and Walk, H. (2002).
\newblock {\em A Distribution-Free Theory of Nonparametric Regression}.
\newblock Springer.

\bibitem[Hastie et~al., 2009]{hastie2009elements}
Hastie, T., Tibshirani, R., and Friedman, J. (2009).
\newblock {\em The Elements of Statistical Learning: Data Mining, Inference, and Prediction}.
\newblock Springer Science \& Business Media.

\bibitem[Ignatiadis et~al., 2023]{ignatiadis2023empirical}
Ignatiadis, N., Saha, S., Sun, D.~L., and Muralidharan, O. (2023).
\newblock Empirical {Bayes} mean estimation with nonparametric errors via order statistic regression on replicated data.
\newblock {\em Journal of the American Statistical Association}, 118(542):987--999.

\bibitem[Jiang, 2019]{jiang2019non}
Jiang, H. (2019).
\newblock Non-asymptotic uniform rates of consistency for {k-NN} regression.
\newblock In {\em Proceedings of the AAAI Conference on Artificial Intelligence}, volume~33, pages 3999--4006.

\bibitem[Kpotufe, 2011]{kpotufe2011k}
Kpotufe, S. (2011).
\newblock k-{NN} regression adapts to local intrinsic dimension.
\newblock {\em Advances in Neural Information Processing Systems}, 24.

\bibitem[Kpotufe and Martinet, 2021]{kpotufe2021marginal}
Kpotufe, S. and Martinet, G. (2021).
\newblock Marginal singularity and the benefits of labels in covariate-shift.
\newblock {\em The Annals of Statistics}, 49(6):3299--3323.

\bibitem[Lalande and Doya, 2023]{lalande2023numerical}
Lalande, F. and Doya, K. (2023).
\newblock Numerical data imputation for multimodal data sets: A probabilistic nearest-neighbor kernel density approach.
\newblock {\em Transactions on Machine Learning Research}.

\bibitem[Lian, 2011]{lian2011convergence}
Lian, H. (2011).
\newblock Convergence of functional k-nearest neighbor regression estimate with functional responses.
\newblock {\em Electronic Journal of Statistics}, 5:31--40.

\bibitem[Madrid~Padilla et~al., 2020]{madrid2020adaptive}
Madrid~Padilla, O.~H., Sharpnack, J., Chen, Y., and Witten, D.~M. (2020).
\newblock Adaptive nonparametric regression with the k-nearest neighbour fused {Lasso}.
\newblock {\em Biometrika}, 107(2):293--310.

\bibitem[Matabuena et~al., 2024]{matabuena2024knn}
Matabuena, M., Vidal, J.~C., Padilla, O. H.~M., and Onnela, J.-P. (2024).
\newblock {kNN} algorithm for conditional mean and variance estimation with automated uncertainty quantification and variable selection.
\newblock {\em arXiv preprint arXiv:2402.01635}.

\bibitem[Runge, 2018]{runge2018conditional}
Runge, J. (2018).
\newblock Conditional independence testing based on a nearest-neighbor estimator of conditional mutual information.
\newblock In {\em International Conference on Artificial Intelligence and Statistics}, pages 938--947. PMLR.

\bibitem[Stone, 1977]{stone1977consistent}
Stone, C.~J. (1977).
\newblock Consistent nonparametric regression.
\newblock {\em The Annals of Statistics}, 5(4):595--620.

\bibitem[Stone, 1974]{stone1974cross}
Stone, M. (1974).
\newblock Cross-validatory choice and assessment of statistical predictions.
\newblock {\em Journal of the Royal Statistical Society: Series B (Methodological)}, 36(2):111--133.

\bibitem[Troyanskaya et~al., 2001]{troyanskaya2001missing}
Troyanskaya, O., Cantor, M., Sherlock, G., Brown, P., Hastie, T., Tibshirani, R., Botstein, D., and Altman, R.~B. (2001).
\newblock Missing value estimation methods for dna microarrays.
\newblock {\em Bioinformatics}, 17(6):520--525.

\end{thebibliography}

\end{document}